\documentclass{article}

% if you need to pass options to natbib, use, e.g.:
%     \PassOptionsToPackage{numbers, compress}{natbib}
% before loading neurips_2020

% ready for submission
% \usepackage{neurips_2020}

% to compile a preprint version, e.g., for submission to arXiv, add add the
% [preprint] option:
%     \usepackage[preprint]{neurips_2020}

% to compile a camera-ready version, add the [final] option, e.g.:
%     \usepackage[final]{neurips_2020}

% to avoid loading the natbib package, add option nonatbib:
\usepackage[preprint]{aaai21}

\usepackage[pdftex]{graphicx}
\usepackage[utf8]{inputenc} % allow utf-8 input
\usepackage[T1]{fontenc}    % use 8-bit T1 fonts
\usepackage{hyperref}       % hyperlinks

\usepackage{url}            % simple URL typesetting
\usepackage{booktabs}       % professional-quality tables
\usepackage{amsfonts}       % blackboard math symbols
\usepackage{nicefrac}       % compact symbols for 1/2, etc.
\usepackage{microtype}      % microtypography
\usepackage{wrapfig}
\usepackage{amsmath}
\usepackage{amsthm}
\usepackage{amssymb}
\usepackage{algorithm}
\usepackage{algorithmic}
\newtheorem{Def}{Definition}[section]
\newtheorem{proposition}{Proposition}[section]

\setcounter{secnumdepth}{2} %May be changed to 1 or 2 if section numbers are desired.

% The file aaai21.sty is the style file for AAAI Press
% proceedings, working notes, and technical reports.
%

% Title

% Your title must be in mixed case, not sentence case.
% That means all verbs (including short verbs like be, is, using,and go),
% nouns, adverbs, adjectives should be capitalized, including both words in hyphenated terms, while
% articles, conjunctions, and prepositions are lower case unless they
% directly follow a colon or long dash

\title{Towards Effective Context for Meta-Reinforcement Learning: an Approach based on Contrastive Learning}
\author{
	\centerline{Haotian Fu\textsuperscript{\rm 1}\thanks{Work done as an intern at Noah's Ark Lab, Huawei}~, Hongyao Tang\textsuperscript{\rm 1}, Jianye Hao\textsuperscript{\rm 1,2}, Chen Chen\textsuperscript{\rm 2}, Xidong Feng\textsuperscript{\rm 3}, Dong Li\textsuperscript{\rm 2}, Wulong Liu\textsuperscript{\rm 2}}
	\\
	\centerline{\textsuperscript{\rm 1}Tianjin University, \textsuperscript{\rm 2}Noah's Ark Lab, Huawei,\textsuperscript{\rm 3}Department of Automation, Tsinghua University} \\
	\{haotianfu, bluecontra\}@tju.edu.cn,\\ \{haojianye, chenchen9, lidong106, liuwulong\}@huawei.com, fengxidongwh@gmail.com
}

%\affiliations{

% See more examples next
%}
\iffalse
%Example, Single Author, ->> remove \iffalse,\fi and place them surrounding AAAI title to use it
\title{My Publication Title --- Single Author}
\author {
	% Author
	Author Name \\
}

\affiliations{
	Affiliation \\
	Affiliation Line 2 \\
	name@example.com
}
\fi

\iffalse
%Example, Multiple Authors, ->> remove \iffalse,\fi and place them surrounding AAAI title to use it
\title{My Publication Title --- Multiple Authors}
\author {
	% Authors
	
	First Author Name,\textsuperscript{\rm 1}
	Second Author Name, \textsuperscript{\rm 2}
	Third Author Name \textsuperscript{\rm 1} \\
}
\affiliations {
	% Affiliations
	\textsuperscript{\rm 1} Affiliation 1 \\
	\textsuperscript{\rm 2} Affiliation 2 \\
	firstAuthor@affiliation1.com, secondAuthor@affilation2.com, thirdAuthor@affiliation1.com
}
\fi
\begin{document}
	\renewcommand{\qedsymbol}{}
	\maketitle
	
	\begin{abstract}
		Context, the embedding of previous collected trajectories, is a powerful construct for Meta-Reinforcement Learning (Meta-RL) algorithms. By conditioning on an effective context, Meta-RL policies can easily generalize to new tasks within a few adaptation steps. We argue that improving the quality of context involves answering two questions: 1. How to train a compact and sufficient encoder that can embed the task-specific information contained in prior trajectories? 2. How to collect informative trajectories of which the corresponding context reflects the specification of tasks? To this end, we propose a novel Meta-RL framework called CCM (\textbf{C}ontrastive learning augmented \textbf{C}ontext-based \textbf{M}eta-RL). We first focus on the contrastive nature behind different tasks and leverage it to train a compact and sufficient context encoder. Further, we train a separate exploration policy and theoretically derive a new information-gain-based objective which aims to collect informative trajectories in a few steps. Empirically, we evaluate our approaches on common benchmarks as well as several complex sparse-reward environments. The experimental results show that CCM outperforms state-of-the-art algorithms by addressing previously mentioned problems respectively.  
	\end{abstract}
	
	\section{Introduction}
	Reinforcement Learning (RL) combined with deep neural networks has achieved impressive results on various complex tasks~\citep{DBLP:journals/nature/MnihKSRVBGRFOPB15,DBLP:journals/corr/LillicrapHPHETS15,DBLP:conf/icml/SchulmanLAJM15}. Conventional RL agents need large amount of environmental interactions to train a single policy for one task. However, in real-world problems many tasks share similar internal structures and we expect agents to adapt to such tasks quickly based on prior experiences. Meta-Reinforcement Learning (Meta-RL) proposes to address such problems by learning how to learn~\citep{DBLP:journals/corr/WangKTSLMBKB16}. Given a number of tasks with similar structures, Meta-RL methods aim to capture such common knowledge from previous experience on training tasks and adapt to a new task with only a small amount of interactions.
	
	Based on this idea, many Meta-RL methods try to learn a general model initialization and update the parameters during adaptation~\citep{DBLP:conf/icml/FinnAL17,DBLP:conf/iclr/RothfussLCAA19}. Such methods require on-policy meta-training and are empirically proved to be sample inefficient. To alleviate this problem, a number of methods~\citep{DBLP:conf/icml/RakellyZFLQ19,DBLP:conf/iclr/FakoorCSS20} are proposed to meta-learn a policy that is able to adapt with off-policy data by leveraging context information. Typically, an agent adapts to a new environment by inferring latent context from a small number of interactions with the environment. The latent context is expected to be able to capture the distribution of tasks and efficiently infer new tasks. Context-based Meta-RL methods then train a policy conditioned on the latent context to improve generalization.
	
	As the key component of context-based Meta-RL, the quality of latent context can affect algorithms' performance significantly. However, current algorithms are sub-optimal in two aspects. Firstly, the training strategy for context encoder is flawed. A desirable context is expected to only extract task-specific information from trajectories and throw away other information. However, the latent context learned by existing methods (i.e. recovering value function~\citep{DBLP:conf/icml/RakellyZFLQ19} or dynamics prediction~\citep{DBLP:journals/corr/abs-2005-06800,DBLP:conf/iclr/ZhouPG19}) are quite noisy as it may model irrelevant dependencies and ignore some task-specific information. Instead, we propose to directly analyze and discriminate the underlying structure behind different tasks' trajectories by leveraging contrastive learning. Secondly, prior context-based Meta-RL methods ignore the importance of collecting informative trajectories for generating distinctive context. If the exploration process does not collect transitions that are able to reflect the task's individual property and distinguish it from dissimilar tasks, the latent context would be ineffective. For instance, in many cases, tasks in one distribution only vary in the final goals, which means the transition dynamics remains the same in most places of the state space. Without a good exploration policy it is hard to obtain information that is able to distinguish tasks from each other, which leads to a bad context.
	
	In this paper, we propose a novel off-policy Meta-RL algorithm CCM (\textbf{C}ontrastive learning augmented \textbf{C}ontext-based \textbf{M}eta-RL), aiming to improve the quality of context by tackling the two aforementioned problems. Our first contribution is an unsupervised training framework for context encoder by leveraging contrastive learning. The main insight is that by setting transitions from the same task as positive samples and the ones from different tasks as negative samples, contrastive learning is able to directly distinguish context in the original latent space without modeling irrelevant dependencies. The second contribution is an information-gain-based exploration strategy. With the purpose of collecting trajectories as informative as possible, we theoretically obtain a lower bound estimation of the exploration objective in contrastive learning framework. Then it is employed as an intrinsic reward, based on which a separate exploration agent is trained. The effectiveness of CCM is validated on a variety of continuous control tasks. The experimental results show that CCM outperforms state-of-the-art Meta-RL methods through generating high-quality latent context.
	\section{Preliminaries}
	\subsection{Meta-Reinforcement Learning}
	\label{2.1}
	In meta-reinforcement learning (Meta-RL) scenario, we assume a distribution of tasks $p(\mu)$. Each task $\mu \sim p(\mu)$ shares similar structures and corresponds to a different Markov Decision Process (MDP), $M_{\mu} = \{S, A, T_{\mu}, R_{\mu}\}$, with state space $S$, action space $A$, transition distribution $T_{\mu}$, and reward function $R_{\mu}$. We assume one or both of transition dynamics and reward function vary across tasks. Following prior problem settings in~\citep{DBLP:journals/corr/DuanSCBSA16,DBLP:conf/iclr/FakoorCSS20,DBLP:conf/icml/RakellyZFLQ19}, we define a meta-test trial as $N$ episodes in the same MDP, with an initial exploration of $K$ episodes, followed by $N-K$ execution episodes leveraging the data collected in exploration phase. We further define a transition batch sampled from task $\mu$ as $b = \{\tau_{i}\}_{i=1}^{k}$. A trajectory consists of $T$ transitions $\tau_{1:T}$ is a special case of transition batch when all transitions are consecutive. For context-based Meta-RL, the agent's policy typically depends on all prior transitions $\tau_{1:\Gamma} = \{(s_{1},a_{1}, r_{1},s'_{1})\cdots(s_{\Gamma},a_{\Gamma}, r_{\Gamma},s'_{\Gamma})\}$ collected by exploration policy $\pi_{exp}$. The agent firstly consumes the collected trajectories and outputs a latent context $z$ through context encoder $q(z|\tau_{1:\Gamma})$, then executes policy $\pi_{exe}$ conditioned on the current state and latent context $z$. The goal of the agent is therefore to maximize the expected return, $\mathbb{E}_{\mu\sim p(\mu)}\big[\mathbb{E}_{\tau_{1:\Gamma}\sim\pi_{exp}}\big[V_{\pi_{exe}}(q(z|\tau_{1:\Gamma}), \mu)\big]\big]$.
	
	\subsection{Contrastive Representation Learning}
	\label{contraspre}
	The key component of representation learning is the way to efficiently learn rich representations of given input data. Contrastive learning has recently been widely used to achieve such purpose. The core idea is to learn representation function that maps semantically similar data closer in the embedding space. Given a query $q$ and keys $\{ k_{0}, k_{1}, \cdots \}$, the goal of contrastive representation learning is to ensure $q$ matches with positive key $k_{i}$ more than any other keys in the data set. Empirically, positive keys and query are often obtained by taking two augmented versions of the same image, and negative keys are obtained from other images. Previous work proposes InfoNCE loss~\citep{DBLP:journals/corr/abs-1807-03748} to score positive keys $k_{i}\sim p(k|q)$ higher compared with a set of $K$ distractors $k_{j}\sim p(k)$:
	\begin{equation}
	\begin{aligned}
	L_{NCE} &= -\mathbb{E}\Big[\log\frac{\exp(f(q, k_{i}))}{\sum _{j=1}^{K}\exp(f(q, k_{j}))} \Big],
	\end{aligned}
	\label{1nce}
	\end{equation}
	where function $f$ calculates the similarity score between query and key data, and is usually modeled as bilinear products, i.e. $q^{T}Wk$~\citep{DBLP:journals/corr/abs-1905-09272}. As proposed and proved in~\citep{DBLP:journals/corr/abs-1807-03748}, minimizing the InfoNCE loss is equivalent to maximizing a lower bound of the mutual information between $q$ and $k$:
	\begin{equation}
	I(q;k)\geqslant \log(K) - L_{NCE}
	\label{maxlow1}
	\end{equation}
	The lower bound becomes tighter as $K$ increases.
	
	\section{Contrastive Context Encoder}
	As the key component in context-based Meta-RL framework, how to train a powerful context encoder is non-trivial. One straightforward method is to train the encoder in an end-to-end fashion from RL loss (i.e. recover the state-action value function~\citep{DBLP:conf/icml/RakellyZFLQ19}). However, the update signals from value function is stochastic and weak that may not capture the similarity relations among tasks. Moreover, recovering value function is only able to implicitly capture high-level task-specific features and may ignore low-level detailed transition difference that contains relevant information as well. Another kind of methods resorts to dynamics prediction~\citep{DBLP:journals/corr/abs-2005-06800,DBLP:conf/iclr/ZhouPG19}. The main insight is to capture the task-specific features via distinguishing varying dynamics among tasks. However, entirely depending on low-level reconstructing states or actions is prone to over-fit on the commonly-shared dynamics transitions and model irrelevant dependencies which may hinder the learning process.

	These two existing methods train the context encoder by extracting high-level or low-level task-specific features but either not sufficient because of ignoring useful information or not compact because of modeling irrelevant dependencies. To this end, here we aim to train a compact and sufficient encoder through extracting mid-level task-specific features. We propose to directly extract the underlying task structure behind trajectories by performing contrastive learning on the nature distinctions of trajectories. Through explicitly comparing different tasks' trajectories instead of each transition's dynamics, the encoder is prevented from modeling commonly-shared information while still be able to capture all the relevant task-specific structures by leveraging the contrastive nature behind different tasks.

	Contrastive learning methods learn representations by pulling together semantically similar data points (positive data pairs) while pushing apart dissimilar ones (negative data pairs). We treat the trajectories sampled from same tasks as the positive data pairs and trajectories from different tasks as negative data pairs. Then the contrastive loss is minimized to gather the latent context of trajectories from same tasks closer in embedding space while pushing apart the context from other tasks.

	Concretely, assuming a training task set containing $M$ different tasks from task distribution $p(\mu)$. We first generate trajectories with current policy for each task and store them in replay buffer. At each training step, we first sample a task $\mu_{n}$ from the training task set, and then randomly sample two batches of transitions $b^{q}_{n}$, $b^{k}_{n}$ from task $\mu_{n}$ independently. $b^{q}_{n}$ serves as a query in contrastive learning framework while $b^{k}_{n}$ is the corresponding positive key. We also randomly sample $M-1$ batches of transitions from the other tasks as negative keys. Note that previous work (e.g. CURL~\citeyearpar{DBLP:journals/corr/abs-2004-04136}) relies on data augmentation on images like random crop to generate positive data. In contrast, independent transitions sampled from the same task replay buffer naturally construct the positive samples.
	
	As shown in Figure~\ref{fig1}, after obtaining query and key data, we map them into latent context $z^{q}$ and $z^{k}$ respectively, on which we calculate similarity score and contrastive loss to train the encoder:
	\begin{equation}
	\begin{aligned}
	L_{contrastive} &= -\mathbb{E}\Big[\log\frac{\exp(f(z^{q}, z^{k}_{n}))}{\sum _{j=1}^{M}\exp(f(z^{q}, z^{k}_{j}))} \Big]
	\end{aligned}
	\label{1nce}
	\end{equation}
	Following the settings in~\citep{DBLP:conf/cvpr/He0WXG20,DBLP:journals/corr/abs-2004-04136}, we use momentum averaged version of the query encoder $e^{q}$ for encoding the keys. The Reinforcement Learning part takes in the latent context as an additional observation, then executes policy and updates separately. Note that the Contrastive Context Encoder is a generic framework and can be integrated with any context-based Meta-RL algorithm.
	
	\begin{figure}[t]
		\centering
		\begin{minipage}[t]{0.44\textwidth}
			\centering
			\includegraphics[width=0.99\linewidth]{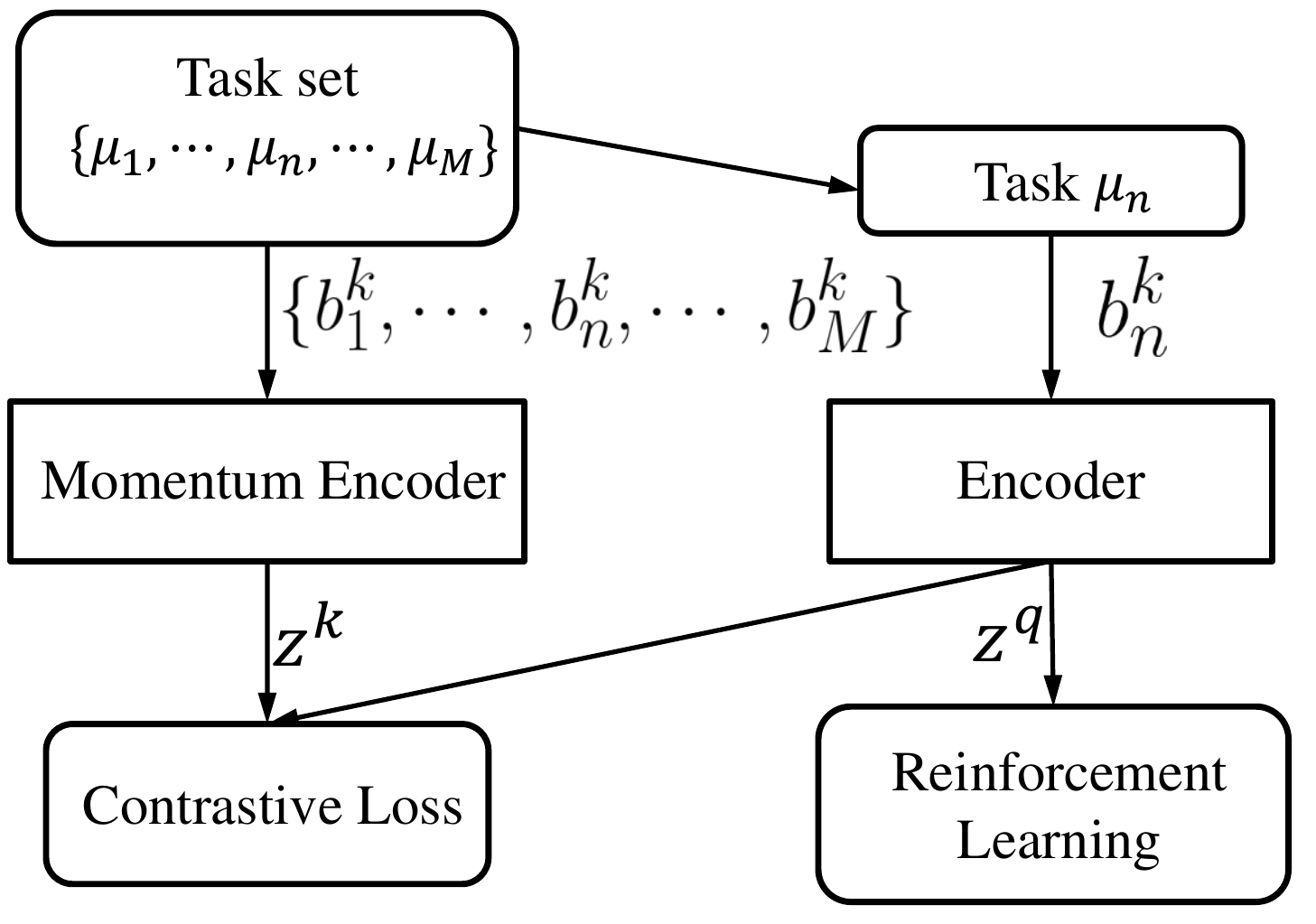}
			\caption{Contrastive Context Encoder}
			\label{fig1}
		\end{minipage}
		\begin{minipage}[t]{0.55\textwidth}
			\centering
			\includegraphics[width=0.99\textwidth]{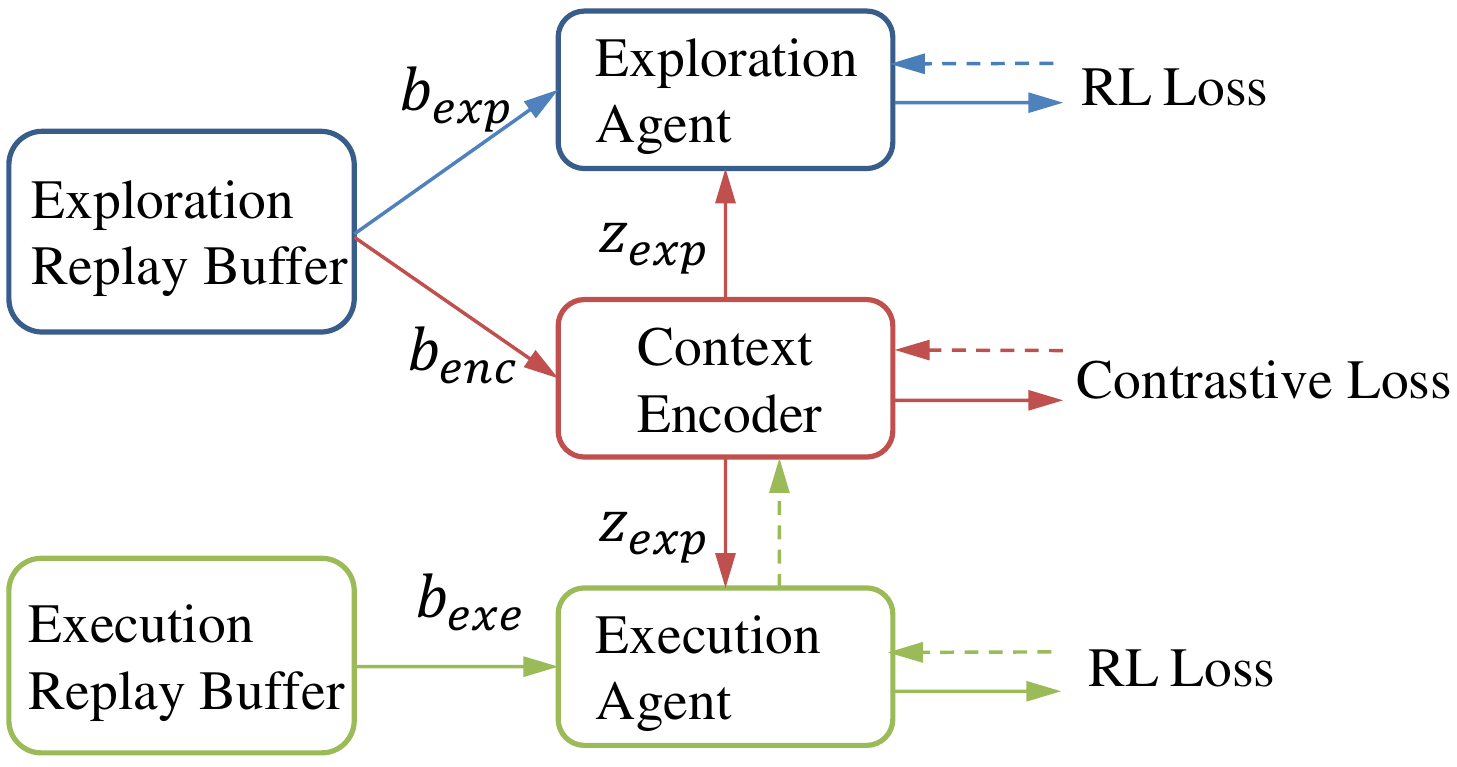}
			\caption{CCM training procedure. Dashed lines denote backward gradients.}
			\label{fig2}
		\end{minipage}
	\end{figure}
	
	\section{Information-gain-based Exploration for Effective Context}
	Even with a compact and sufficient encoder, the context is still ineffective if the exploration process does not collect enough transitions that is able to reflect the new task's specific property and distinguish it from dissimilar tasks. Some previous approaches~\citep{DBLP:conf/icml/RakellyZFLQ19,DBLP:conf/iclr/RothfussLCAA19} utilize Thompson-sampling~\citep{Schmidhuber1987EvolutionaryPI} to explore, in which an agent needs to explore in the initial few episodes and execute optimally in the subsequent episode. However, these two processes is actually conducted by one single policy and is trained in an end-to-end fashion by maximizing expected return. This means the exploration policy is limited to the learned execution policy. When adapting to a new task, the agent tends to only explore experiences which are useful for solving previously trained tasks, making the adaptation process less effective.
	
	Instead, we decouple the exploration and execution policy and define an exploration agent aiming to collect trajectories as informative as possible. We achieve this goal by encouraging the exploration policy to maximize the \textit{information gain} from collecting new transition $\tau_{i}$ at time step $i$ of task $\mu_{n}$:
	\begin{equation}
	I(z|\tau_{1:i-1}; \tau_{i}) = H(z|\tau_{1:i-1}) - H(z|\tau_{1:i}),
	\label{firstmi}
	\end{equation}
	where $\tau_{1:i-1}$ denotes the collected $i-1$ transitions before time step $i$. The above equation can also be interpreted as how much task belief $z$ has changed given the newly collected transition. We further transform the equation as follows:
	\begin{equation}
	\begin{aligned}
	I(z|\tau_{1:i-1}; \tau_{i})
	&= H(z|\tau_{1:i-1})+ H(z) - H(z) - H(z|\tau_{1:i})\\
	&= I(z; \tau_{1:i}) - I(z; \tau_{1:i-1})
	\end{aligned}
	\label{22}
	\end{equation}
	Equation~(\ref{22}) implies that the information gain from collecting new transition $\tau_{i}$ can be written as the temporal difference of the mutual information between task belief and collected trajectories. This indicate that we expect the exploration agent to collect informative experience that form a solid hypothesis for task $\mu_{n}$.
	
	Although theoretically sound, the mutual information of latent context and collected trajectories is hard to directly estimate. We expect a tractable form of Equation~(\ref{22}) without losing information. To this end, we first theoretically define the sufficient encoder for context in Meta-RL framework. Given two batches of transitions $b_{1}$ and $b_{2}$ from one task, the encoders $e_{1}(\cdot)$ and $e_{2}(\cdot)$ extract representations $c_{1} = e_{1}(b_{1})$ and $c_{2} = e_{2}(b_{2})$, respectively.
	
	\begin{Def}
		(Context-Sufficient Encoder) The context encoder $e(\cdot)$ of $b_{1}$ is sufficient if and only if $I(b_{1}, b_{2}) = I(e(b_{1}), b_{2})$.
	\end{Def}

	Intuitively, the mutual information remains the same as the encoder $e(\cdot)$ does not change the amount of information contained in $b_{1}$, which means the encoder is context-sufficient.
	
	Assuming the context encoder trained in previous section is context-sufficient, we can utilize this property to further transform the information gain objective in Equation (\ref{22}) in a view at the latent context level. Given batches of transitions from the same task $b_{pos}, \tau_{1:i} \sim \mu_{n}$ and a context-sufficient encoder $e(\cdot)$, the latent context $c_{1:i} = e(\tau_{1:i})$, $c_{1:i-1} = e(\tau_{1:i-1})$. As for the latent context $z$, we can approximate it as the embedding of positive transition batch as well: $z \approx e(b_{pos}) = c_{pos}$. Then (\ref{22}) becomes:
	\begin{equation}
	\begin{aligned}
	I(z|\tau_{1:i-1}; \tau_{i}) &= I(z; \tau_{1:i}) - I(z; \tau_{1:i-1})\\ &= I[z; e(\tau_{1:i})] - I[z; e(\tau_{1:i-1})]\\ &= I(c_{pos}; c_{1:i}) - I(c_{pos}; c_{1:i-1})
	\end{aligned}
	\label{diff2}
	\end{equation}
	Given this form of equation, we can further derive a tractable lower bound of $I(z|\tau_{1:i-1}; \tau_{i})$. The calculation process for the lower bound of Equation~(\ref{diff2}) can be decomposed into computing the lower bound of $I(c_{pos}; c_{1:i})$ and the upper bound of $I(c_{pos}; c_{1:i-1})$.
	
	As mentioned in Preliminaries, a commonly used lower bound of $I(c_{pos}, c_{1:i})$ can be written as:
	\begin{equation}
	I(c_{pos}; c_{1:i})\geqslant -L_{lower} + \log(W)
	\label{lowerbound1}
	\end{equation}
	where,
	\begin{equation}
	L_{lower} = -\mathop{\mathbb{E}}\limits_{C}\log\Big[\frac{\exp(f(c_{1:i}, c_{pos}))}{\sum _{j=1}^{M}\exp(f(c_{1:i}, c_{j}))} \Big]
	\label{lowerbound11}
	\end{equation}
	$W$ denotes the number of tasks. $C=C_{pos}\cup C_{neg}$, where $C_{pos}$ contains latent context from the same tasks while $C_{neg}$ contains latent context from different ones. As stated in previous section, we optimize the former term in (\ref{lowerbound1}) to train the context encoder and similarity function $f$.
	
	We also need to find a tractable form for the upper bound of $I[c_{pos}; c_{1:i-1}]$. Leveraging current contrastive learning framework, we make the following proposition:
	\begin{proposition}
		\begin{equation}
		\begin{aligned}
		I(c_{pos}; c_{1:i-1}) &\leqslant - L_{upper} + \log(W)
		\label{lowerbound2}
		\end{aligned}
		\end{equation}
		where,
		\begin{equation}
		\begin{aligned}
		L_{upper} = -\mathop{\mathbb{E}}\limits_{C_{pos}}log\Big[\frac{\exp(f(c_{1:i-1}, c_{pos}))}{\sum _{c_{j}\in C_{neg}}\exp(f(c_{1:i-1}, c_{j}))} \Big]
		\label{lowerbound3}
		\end{aligned}
		\end{equation}
	\end{proposition}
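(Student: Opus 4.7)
The plan is to mirror the standard InfoNCE lower-bound derivation but reverse the direction of Jensen's inequality, so that a contrastive-style score yields an \emph{upper} bound on $I(c_{pos}; c_{1:i-1})$ rather than the usual lower bound. The key structural feature of $L_{upper}$ that makes this work is that its denominator sums only over $C_{neg}$ and excludes the positive sample; once the Bayes-optimal critic is substituted in, we recover a clean density-ratio whose expectation over i.i.d.\ marginal draws from $p(c)$ is exactly $|C_{neg}|$.

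The first step is to invoke the same optimal-critic assumption already used to obtain the InfoNCE lower bound in~(\ref{lowerbound1}), namely that the trained scoring function $f$ satisfies $\exp(f(c_{1:i-1}, c)) \propto p(c \mid c_{1:i-1}) / p(c)$. Substituting this into $-L_{upper}$ and splitting the logarithm decomposes it into (i) the expected log density-ratio evaluated at the positive pair, which is $I(c_{pos}; c_{1:i-1})$ by the definition of mutual information, and (ii) the expected log of the sum $\sum_{c_j \in C_{neg}} p(c_j \mid c_{1:i-1})/p(c_j)$ over the negatives.

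The second step is to upper-bound term (ii) via Jensen's inequality in the direction $\mathbb{E}[\log X] \le \log \mathbb{E}[X]$. Because each $c_j$ is drawn from the marginal $p(c)$ independently of $c_{1:i-1}$, the inner expectation $\mathbb{E}_{c_j}[p(c_j \mid c_{1:i-1})/p(c_j)] = 1$, so the sum has mean $|C_{neg}| = W-1$ and Jensen gives $\mathbb{E}\big[\log \sum_{c_j \in C_{neg}} \cdots\big] \le \log(W-1) \le \log W$. Combining (i) and (ii) yields $-L_{upper} \ge I(c_{pos}; c_{1:i-1}) - \log W$, which on rearrangement is exactly the claimed inequality. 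A brief invocation of the context-sufficient encoder definition is also needed to justify the earlier replacement of $z$ by $c_{pos}$, but this is transparent given Definition~1.

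The main obstacle is the optimal-critic substitution in step one: strictly, the bound is tight only when $f$ is Bayes-optimal, and a fully rigorous proof would need a slack term quantifying the gap between the trained $f$ and the optimal critic (for instance, a KL penalty between $p(c \mid c_{1:i-1})/p(c)$ and the normalized $\exp f$). Since the same optimality assumption is already implicit when the paper uses the InfoNCE lower bound in~(\ref{lowerbound1}), I would handle this by stating the assumption explicitly alongside the proposition, noting that in the well-trained limit $-L_{lower} + \log W$ and $-L_{upper} + \log W$ consistently sandwich the mutual information between $c_{pos}$ and the running context.
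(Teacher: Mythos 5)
Your proof is correct and follows the same skeleton as the paper's own derivation in the appendix: substitute the Bayes-optimal critic $\exp(f(c_{1:i-1},c)) = p(c\mid c_{1:i-1})/p(c)$ into $L_{upper}$, identify the positive-pair log-ratio with $I(c_{pos}; c_{1:i-1})$, and control the log of the sum over negatives by $\log(W)$. The one place you genuinely depart from the paper is the treatment of that sum. The paper replaces $\sum_{c_{j}\in C_{neg}} p(c_{j}\mid c_{1:i-1})/p(c_{j})$ by $(W-1)\,\mathbb{E}_{c_{neg}}\big[p(c_{j}\mid c_{1:i-1})/p(c_{j})\big] = W-1$ via an explicit Monte Carlo approximation (an $\approx$ step inherited from the original InfoNCE derivation) and only then bounds $\log(W-1)\leqslant\log(W)$. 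You instead apply Jensen's inequality $\mathbb{E}[\log X]\leqslant\log\mathbb{E}[X]$ directly to the sum, obtaining $\mathbb{E}\big[\log\sum_{c_{j}\in C_{neg}}(\cdot)\big]\leqslant\log(W-1)\leqslant\log(W)$ exactly, for any finite number of negatives, and in precisely the direction the proposition requires. This is a small but real improvement: it eliminates the only non-rigorous step in the paper's argument, and it works here only because the denominator of $L_{upper}$ excludes the positive sample, so Jensen happens to point the right way (it points the wrong way for the companion lower bound in (\ref{lowerbound1}), which is why that bound cannot be repaired the same way). Your caveat about the optimal-critic substitution is the same assumption the paper makes implicitly when it writes ``the optimal value for $\exp(f(c_{1:i-1}, c_{pos}))$''; stating it explicitly, as you propose, would strengthen the presentation of both bounds.
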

	\begin{proof}
		see Appendix.
	\end{proof}
	
	\noindent Compared with InfoNCE loss, the only difference between these two bounds is whether the similarity relations $exp(f(c_{1:i}; c_{pos}))$ between query and positive key is included in denominator. Then we can further derive the lower bound of Equation (\ref{diff2}) as:
	\begin{proposition}
		\begin{equation}
		\begin{aligned}
		&I(z|\tau_{1:i-1}; \tau_{i}) \geqslant L_{upper} - L_{lower} \\ &= \mathop{\mathbb{E}}\limits_{C_{pos}}\log\Big[\frac{\exp(f(c_{1:i}, c_{pos}))}{\sum _{c_{j}\in C}\exp(f(c_{1:i}, c_{j}))} \Big] -\mathop{\mathbb{E}}\limits_{C_{pos}}\log\Big[\frac{\exp(f(c_{1:i-1}, c_{pos}))}{\sum _{c_{j}\in C_{neg}}\exp(f(c_{1:i-1}, c_{j}))} \Big]
		\label{lowerbound5}
		\end{aligned}
		\end{equation}
	\end{proposition}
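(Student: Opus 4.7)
The plan is to combine the decomposition established in Equation~(6) with the two bounds already available: the InfoNCE lower bound on $I(c_{pos}; c_{1:i})$ inherited from the preliminaries via Equation~(7), and the new upper bound on $I(c_{pos}; c_{1:i-1})$ just stated as Proposition~1. Since the target quantity is a difference of mutual informations, lower-bounding it reduces to lower-bounding the minuend and upper-bounding the subtrahend, then subtracting; the proposition is therefore almost a direct corollary of what precedes it.

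Concretely, I would first invoke Equation~(6), $I(z|\tau_{1:i-1}; \tau_{i}) = I(c_{pos}; c_{1:i}) - I(c_{pos}; c_{1:i-1})$, which relies on the context-sufficiency of $e(\cdot)$ assumed earlier and on the approximation $z \approx c_{pos}$. Second, I would apply the InfoNCE lower bound recalled in Equation~(7), namely $I(c_{pos}; c_{1:i}) \geq -L_{lower} + \log(W)$, whose derivation is the standard one from~\citep{DBLP:journals/corr/abs-1807-03748} reproduced in Section~\ref{contraspre}. Third, I would apply Proposition~1, $I(c_{pos}; c_{1:i-1}) \leq -L_{upper} + \log(W)$, taking its proof as a black box (the appendix establishes it). Combining the two, the $\log(W)$ constants cancel exactly and yield
\begin{equation*}
I(z|\tau_{1:i-1}; \tau_{i}) \;\geq\; \bigl(-L_{lower} + \log(W)\bigr) - \bigl(-L_{upper} + \log(W)\bigr) \;=\; L_{upper} - L_{lower}.
\end{equation*}

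The last step is to substitute the definitions of $L_{lower}$ from Equation~(8) and $L_{upper}$ from Equation~(10), which produces the explicit expectation-of-log-ratio expression displayed in the statement. There is no genuine obstacle at this stage: the heavy lifting has been offloaded to Proposition~1 and to the standard InfoNCE inequality. The only subtlety worth double-checking is a sign convention — both $L_{lower}$ and $L_{upper}$ are defined with a leading minus sign (negative log-likelihood style), so upper-bounding $I(c_{pos}; c_{1:i-1})$ by $-L_{upper} + \log(W)$ contributes $+L_{upper}$ after subtraction, which is exactly why $L_{upper}$ appears with a positive sign in the final bound. A second minor sanity check is that the denominators differ only by whether the positive-key similarity $\exp(f(\cdot,c_{pos}))$ is included, matching the remark made immediately after Proposition~1 and ensuring that the two losses one is asked to estimate in practice differ by a single controllable term.
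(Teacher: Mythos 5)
Your proposal is correct and follows exactly the paper's own argument: the paper proves this proposition by "simply calculating the difference between Equation (\ref{lowerbound1}) and (\ref{lowerbound2})," i.e., subtracting the Proposition~1 upper bound on $I(c_{pos}; c_{1:i-1})$ from the InfoNCE lower bound on $I(c_{pos}; c_{1:i})$ after the decomposition in Equation (\ref{diff2}), with the $\log(W)$ terms cancelling. Your sign-convention check and the substitution of the explicit loss definitions are consistent with the paper's derivation.
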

	\begin{proof}
		This can be derived by simply calculating the difference between Equation (\ref{lowerbound1}) and (\ref{lowerbound2}).
	\end{proof}
	Thus, we obtain an estimate of the lower bound of mutual information $I(z|c_{1:i-1}; c_{i})$ and we further use it as an intrinsic reward for the independent exploration agent:
	\begin{equation}
	\begin{aligned}
	r^{e} = r^{env} + \alpha r^{aux},\textit{where}~ r^{aux} = L_{upper} - L_{lower}
	\label{reward}
	\end{aligned}
	\end{equation}
	The intrinsic reward can be interpreted as the difference between two contrastive loss. It measures how much the inference for current task has been improved after collecting new transition $c_{i}$. To make this objective more clear and comprehensible, we can transform the lower bound (\ref{lowerbound5}) as: 
	\begin{equation}
	\begin{aligned}
	L_{upper} - L_{lower}  = &\mathop{\mathbb{E}}\limits_{C_{pos}}\Big[f(c_{1:i},c_{pos}) - f(c_{1:i-1},c_{pos}) \Big] \\ - &\mathop{\mathbb{E}}\limits_{C_{pos}}\log\Big[\frac{\sum _{c_{j}\in C}\exp(f(c_{1:i}, c_{j}))}{\sum _{c_{j}\in C_{neg}}\exp(f(c_{1:i-1}, c_{j}))}\Big]
	\label{lowerbound6}
	\end{aligned}
	\end{equation}

	Intuitively, the first term is an estimation of how much the similarity score for positive pairs (the correct task inference) has improved after collecting new transition.  Maximizing this term means that we want the exploration policy to collect transitions which make it easy to distinguish between tasks and make correct and solid task hypothesis. The second term can be interpreted as a regularization term. Maximizing this term means that while the agent tries to visit places which may result in enhancement of positive score, we want to limit the policy not visiting places where the negative score (the similarity with other tasks in embedding space) enhances as well.

	We summarize our training procedure in Figure~\ref{fig2}. The exploration agent and execution agent interact with environment separately to get their own replay buffer. The execution replay buffer also contains transitions from exploration agent that do not add on the intrinsic reward term. At the beginning of each meta-training episode, the two agents sample transition data $b_{exp}$ and $b_{exe}$ to train their policy separately. We independently sample from exploration buffer to obtain transition batches for calculating latent context. Note that the reward terms in $b_{enc}$ do not add on the intrinsic reward and only use the original environmental reward for computing latent context. In practice, we utilize Soft Actor-Critic~\citep{DBLP:conf/icml/HaarnojaZAL18} for both exploration and execution agent. We first pretrain the context encoder with contrastive loss for a few episodes to avoid the intrinsic reward being too noisy at the beginning of training. We also maintain an encoder replay buffer that only contain recently collected data for the same reason. During meta-testing, we first let the exploration agent collect trajectories for a few episodes then compute the latent context based on these experience. The execution agent then acts conditioned on the latent context. The pseudo-code for CCM during training and adaptation can be found in our appendix.

	\section{Experiments}
	In this section, we evaluate the performance of our proposed CCM to answer the following questions\footnote{We provide more experimental results in the Appendix, including ablation studies of the influence of intrinsic reward scale, context updating frequency and the regularization term respectively.}: 
	\begin{itemize}
		\item Does CCM's contrastive context encoder improve the performance of state-of-the-art context-based Meta-RL methods?
		\item After combining with information-gain-based exploration policy, does CCM improve the overall adaptation performance compared with prior Meta-RL algorithms? 
		\item Is CCM able to extract effective and reasonable context information?
	\end{itemize}

	\subsection{Comparison of Context Encoder Training Strategy}
	We first evaluate the performance of context-based Meta-RL methods after combining with contrastive context encoder on several continuous control tasks simulated via MuJoCo physics simulator~\citep{DBLP:conf/iros/TodorovET12}, which are standard Meta-RL benchmarks used in prior work~\citep{DBLP:conf/iclr/FakoorCSS20,DBLP:conf/icml/RakellyZFLQ19}. We also evaluate its performance on out-of-distribution (OOD) tasks, which are more challenging cases for testing encoder's capacity of extracting useful information. We compare to the existing two kinds of training strategy for context encoder: 1) RV (Recovering Value-function)~\citep{DBLP:conf/icml/RakellyZFLQ19,DBLP:conf/iclr/FakoorCSS20}, in which the encoder is trained with gradients from recovering the state-action value function. 2) DP (Dynamics Prediction)~\citep{DBLP:journals/corr/abs-2005-06800,DBLP:conf/iclr/ZhouPG19}, in which the encoder is trained by performing forward or backward prediction. Here, we follow the settings in CaDM~\citep{DBLP:journals/corr/abs-2005-06800} which uses the combined prediction loss of forward prediction and backward prediction to update the context encoder\footnote{For environments that have fixed dynamics, we further modify the model to predict the reward as well.}. Our method CCM+RV denotes the encoder receives gradients from both the contrastive loss and value function, while CCM+DP denotes the encoder receives gradients from both the contrastive loss and dynamics prediction loss. To maintain consistency, we does not consider extra exploration policy in this part of evaluation and all three methods use the same network structure (i.e. actor-critic, context encoder) and evaluation settings as state-of-the-art context-based Meta-RL method PEARL.

	The results are shown in Figure~\ref{fig5}. Both CCM+DP and CCM+RV achieve comparably or better performance than existing algorithms, implying that the proposed contrastive context encoder can help to extract contextual information more effectively. For in-distribution tasks that vary in dynamics (i.e. ant-mass and cheetah-mass), CCM+DP obtains better returns and converges faster than CCM+RV. This is consistent to what is empirically found in CaDM~\citep{DBLP:journals/corr/abs-2005-06800}: prediction models is more effective when the transition function changes across tasks. Moreover, recall that contrastive context encoder focus on the difference between tasks at trajectory level. Such property may compensate for DP's drawbacks which can embed commonly-shared dynamics knowledge that hinder the meta-learning process. We assume that this property leads to the better performance of CCM+DP on in-distribution tasks. 
	
	\begin{figure*}[t]
		\centering
		\includegraphics[width=0.98\textwidth]{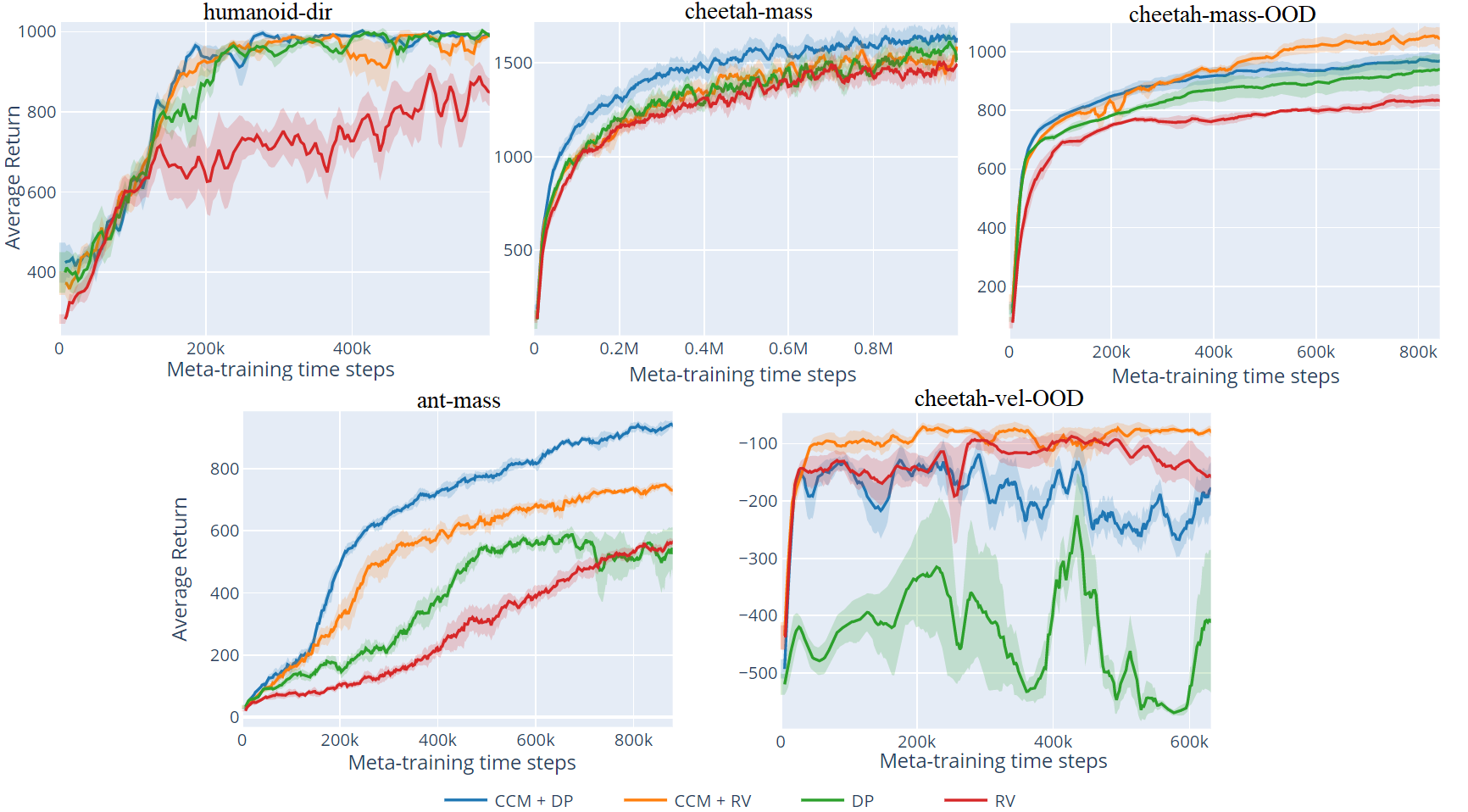}
		\caption{Comparison for different context encoder training strategies. Our methods CCM + DP and CCM + RV achieve consistently better performance compared to existing methods. The error bar shows 1 standard deviation.}
		\label{fig5}
	\end{figure*}
	
	For out-of-distribution tasks, CCM+RV outperforms other methods including CCM+DP, implying that contrastive loss combined with loss from recovering value function obtains better generalization for different tasks. Recovering value function focuses on extracting high-level task-specific features. This property avoids the overfitting problem of dynamics prediction (limited to common dynamics in training tasks), of which is amplified in this case. However, solely using RV to train the encoder can perform poorly due to its noisy updating signal as well as the ignorance of detailed transition difference that contain task-specific information. Combining with CCM addresses such problems through the usage of low-variance InfoNCE loss as well as forcing the encoder explicitly find the trajectory-level information that varies tasks from each other.

	\subsection{Comparison of Overall Adaptation Performance on Complex Environments}
	We then consider both components of CCM algorithm and evaluate on several complex sparse-reward environments to compare the performance of CCM with prior Meta-RL algorithms and show whether the information-gain-based exploration strategy improve its adaptation performance. For cheetah-sparse and walker-sparse environments, agent receives a reward when it is close to the target velocity. In hard-point-robot, agent needs to reach two randomly selected goals one after another, and receives a reward when it's close enough to the goals. We compare CCM with state-of-the-art Meta-RL approach PEARL, and we further modify it with a contrastive context encoder for a fair comparison (PEARL-CL) to measure the influence of CCM's exploration strategy. We further consider MAML~\citep{DBLP:conf/icml/FinnAL17} and ProMP~\citep{DBLP:conf/iclr/RothfussLCAA19}, which adapt with on-policy methods. We also compare to varibad~\citep{DBLP:conf/iclr/ZintgrafSISGHW20}, which is a stable version of RL2~\citep{DBLP:journals/corr/DuanSCBSA16,DBLP:journals/corr/WangKTSLMBKB16}. 
	
	As shown in Figure~\ref{fig6}, CCM achieves better performance than prior Meta-RL algorithms in both final returns and learning speed in such complex sparse-reward environments. Within relatively small number of environment interactions, on policy methods MAML, varibad and ProMP struggle to learn effective policies on these complex environment while off-policy context-based methods CCM and PEARL-CL generally achieves higher performance.
	
	\begin{figure*}[t]
		\centering
		\includegraphics[width=0.95\textwidth]{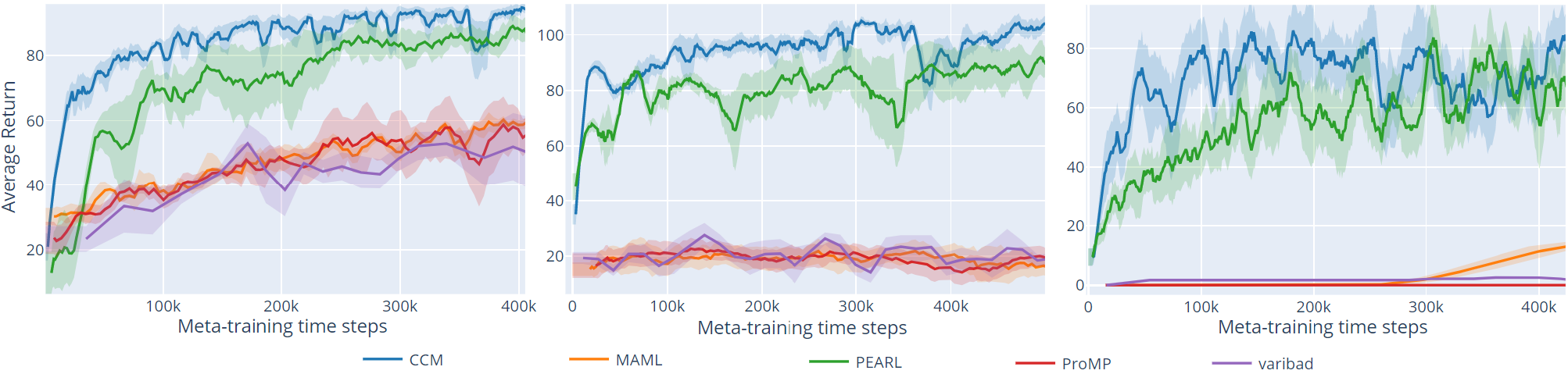}
		
		\caption{CCM's overall performance compared with state-of-the-art Meta-RL methods on complex sparse-reward environments. From left to right:
			\textit{walker-sparse},\textit{cheetah-sparse},  \textit{hard-point-robot}.The error bar shows 1 standard deviation.}
		\label{fig6}
	\end{figure*}
	
	As the only difference between CCM and PEARL-CL here is whether to use the proposed extra exploration policy, we can conclude that CCM's information-gain-based exploration strategy enables fast adaptation by collecting more informative experience and further improving the quality of context when carrying out execution policies. Note that during training phases, trajectories collected by the exploration agent are used for updating the context encoder and the execution policy as well. As shown in experimental results, this may lead to faster training speed of context encoder and execution policy because of the more informative replay buffer.

	\subsection{Visualization for Context}
	Finally, we visualize CCM's learned context during adaptation phases via t-SNE~\citep{Maaten2008VisualizingDU} and compare with PEARL. We run the learned policies on ten randomly sampled test tasks multiple times to collect trajectories. Further, we encode the collected trajectories into latent context in embedding space with the learned context encoder and visualize via t-SNE in Figure~\ref{fig7}. We find that the latent context generated by CCM from the same tasks is close together in embedding space while maintains clear boundaries between different tasks. In contrast, the latent context generated by PEARL shows globally messy clustering results with only a few reasonable patterns in local region. This indicates that CCM extracts high-quality (i.e. compact and sufficient) task-specific information from the environment compared with PEARL. As a result, the policy conditioned on the high-quality latent context is more likely to get a higher return on those meta-testing tasks, which is consistent to our prior empirical results. 
	
	\begin{figure}[htbp]
		\centering
		
		\includegraphics[width=0.65\textwidth]{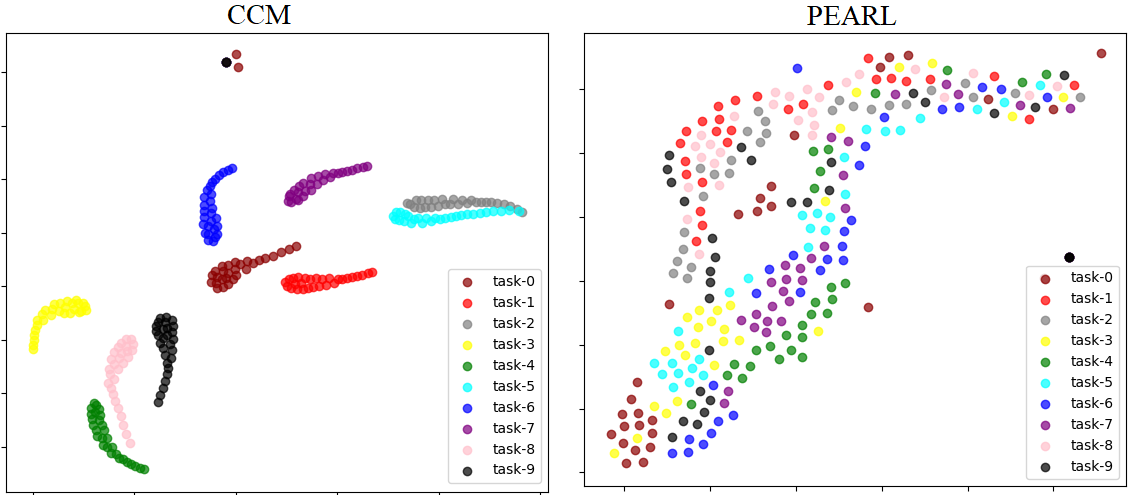}
		\caption{Visualization of context in embedding space. Different color represents context from different tasks.}
		\label{fig7}
	\end{figure}
	
	\section{Related Work}
	\subsection{Contrastive Learning}
	Contrastive learning has recently achieved great success in learning representations for image or sequential data~\cite{DBLP:journals/corr/abs-1807-03748,DBLP:journals/corr/abs-1905-09272,DBLP:conf/cvpr/He0WXG20,DBLP:journals/corr/abs-2002-05709}. In RL, it has been used to extract reward signals in latent space~\cite{DBLP:conf/icra/SermanetLCHJSLB18,DBLP:conf/iros/DwibediTLS18}, or used as an auxiliary task to study representations for high-dimensional data~\cite{DBLP:journals/corr/abs-2004-04136,DBLP:conf/nips/AnandROBCH19}. Contrastive learning helps learn representations that obey similarity constraints by dividing the data set into similar (positive) and dissimilar (negative) pairs and minimizes contrastive loss. Prior work~\cite{DBLP:journals/corr/abs-2004-04136,DBLP:journals/corr/abs-1905-09272} has shown various methods of generating positive and negative pairs for image-based input data. The standard approach is to create multiple views of each datapoint like random crops and data augmentations~\cite{Wu2018UnsupervisedFL,DBLP:journals/corr/abs-2002-05709,DBLP:conf/cvpr/He0WXG20}. However, in this work we focus on low dimensional input data and leverage natural discrimination inside the trajectories of different tasks to generate positive and negative data. The selection of contrastive loss function is also various and the most competitive one is InfoNCE~\cite{DBLP:journals/corr/abs-1807-03748}. The motivation behind contrastive loss is the InfoMax principle~\cite{DBLP:journals/computer/Linsker88}, which can be interpreted as maximizing the mutual information between two views of data. The relationships between InfoNCE loss and mutual information is conprehensively explained in~\cite{DBLP:conf/icml/PooleOOAT19}.
	
	\subsection{Meta-Reinforcement Learning}
	Meta-RL extends the framework of meta-learning~\cite{Schmidhuber1987EvolutionaryPI,Thrun1998LearningTL,Naik1992MetaneuralNT} to the reinforcement learning setting. Recurrent or recursive Meta-RL methods~\cite{DBLP:journals/corr/DuanSCBSA16,DBLP:journals/corr/WangKTSLMBKB16} directly learn a function that takes in past experiences and outputs a policy for the agent. Gradient-based Meta-RL~\cite{DBLP:conf/icml/FinnAL17,DBLP:conf/iclr/RothfussLCAA19,Liu2019TamingME,Gupta2018MetaReinforcementLO} methods meta learn a model initialization and adapt the parameters with policy gradient methods. These two kinds of methods focus on on-policy meta-training and are struggling to achieve good performance on complicated environments.
	
	We here consider context-based Meta-RL~\cite{DBLP:conf/icml/RakellyZFLQ19,Fu2019EfficientMR, DBLP:conf/iclr/FakoorCSS20}, which is one kind of Meta-RL algorithms that is able to meta-learn from off-policy data by leveraging context. Rakelly et al.~(\citeyear{DBLP:conf/icml/RakellyZFLQ19}) propose PEARL that adapts to a new environment by inferring latent context variables from a small number of trajectories. Fakoor et al.~(\citeyear{DBLP:conf/iclr/FakoorCSS20}) propose MQL showing that context combined with other vanilla RL algorithms performs comparably to PEARL. Lee et al.~(\citeyear{DBLP:journals/corr/abs-2005-06800}) learn a global model that generalizes across tasks by training a latent context to capture the local dynamics. In contrast, our approach directly focuses on the context itself, which motivates an algorithm to improve the quality of latent context.
	
	Some prior research has proposed to explore to obtain the most informative trajectories in Meta-RL~\cite{DBLP:conf/icml/RakellyZFLQ19,DBLP:conf/iclr/RothfussLCAA19,Gupta2018MetaReinforcementLO}. The common idea behind these approaches is Thompson-sampling~\cite{Schmidhuber1987EvolutionaryPI}. However, this kind of posterior sampling is conditioned on learned execution policy which may lead to the exploration behavior only limited to the learned tasks. In contrast, we theoretically obtain a lower bound estimation of the exploration objective, based on which a separate exploration agent is trained.
	
	\section{Conclusion}
	In this paper, we propose that constructing a powerful context for Meta-RL involves two problems: 1) How to collect informative trajectories of which the corresponding context reflects the specification of tasks? 2) How to train a compact and sufficient encoder that can embed the task-specific information contained in prior trajectories? We then propose our method CCM which tackles the above two problems respectively. Firstly, CCM focuses on the underlying structure behind different tasks' transitions and trains the encoder by leveraging contrastive learning. CCM further learns a separate exploration agent with an information-theoretical objective that aims to maximize the improvement of inference after collecting new transitions. The empirical results on several complex simulated control tasks show that CCM outperforms state-of-the-art Meta-RL methods by addressing the aforementioned problems. 
	
	\section{Acknowledgements}
	We thank Chenjia Bai, Qianli Shen, and Weixun Wang for useful discussions and suggestions.
	
	\bibliography{aaai21.bib}
	\bibliographystyle{iclr2020_conference}
	
	\clearpage
	
	\onecolumn
	
	\appendix
	\section{Environment Details}
	We run all experiments with OpenAI gym~\citep{DBLP:journals/corr/BrockmanCPSSTZ16}, with the mujoco simulator~\citep{DBLP:conf/iros/TodorovET12}. The benchmarks used in our experiments are visualized in Figure~\ref{figapp0}. We further modify the original tasks to be Meta-RL tasks similar to~\citep{DBLP:conf/icml/RakellyZFLQ19,DBLP:journals/corr/abs-2005-06800,DBLP:conf/iclr/FakoorCSS20}:
	\begin{itemize}
		\item humanoid-dir: The target direction of running changes across tasks. The horizon is 200.
		\item cheetah-mass: The mass of cheetah changes across tasks to change transition dynamics. The horizon is 200.
		\item cheetah-mass-OOD: The mass of cheetah for a training task is sampled uniformly from $[0.2, 1.5]$ while that for test task is sampled uniformly randomly from $[1.5, 1.8]$. The horizon is 200.
		\item ant-mass: The mass of ant changes across tasks to change transition dynamics. The horizon is 200.
		\item cheetah-vel-OOD: Target velocity for a training task is sampled uniformly from $[0, 2.5]$ while that for test task is sampled uniformly randomly from $[2.5, 3.0]$. The horizon is 200.
		\item cheetah-sparse: Both target velocity and mass of cheetah change across tasks. The agent receives a dense reward only when it is close enough to the target velocity. The horizon is 64.
		\item walker-sparse: The target velocity changes across tasks. The agent receives a dense reward only when it is close enough to the target velocity. The horizon is 64.
		\item hard-point-robot: The agent needs to reach two randomly selected goals one after another with reward given when inside the goal radius. The horizon is 40.
	\end{itemize}
	
	\begin{figure}[htbp]
		\centering
		
		\includegraphics[width=0.9\textwidth]{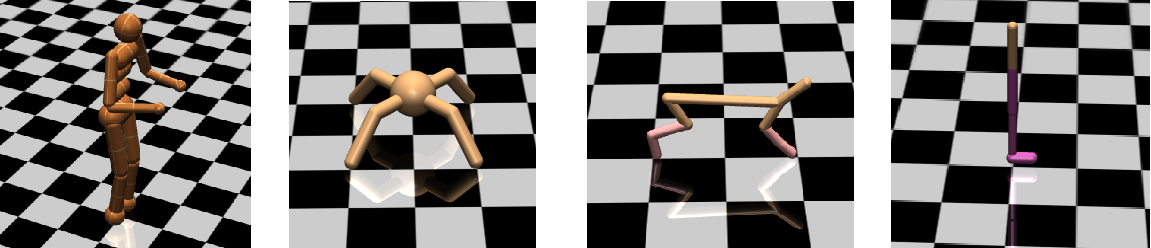}
		\caption{Meta-RL tasks: left-to-right: \textit{humanoid, ant, half cheetah}, and \textit{walker}.}
		\label{figapp0}
	\end{figure}
	
	\begin{algorithm}[htb]
		\caption{CCM Meta-training}
		\label{alg1}
		\begin{algorithmic}[1]
			\REQUIRE Batch of training tasks $\{\mu_{n}\}_{n=1,...,M}$ from $p(\mu)$,
			
			\STATE Initialize execution replay buffer $\mathcal{B}^{i}_{exe}$, exploration replay buffer $\mathcal{B}^{i}_{exp}$ for each training task
			\STATE Initialize parameters $\theta_{exp}$ and $\theta_{exe}$ for the off-policy method employed by exploration and execution respectively, e.g., SAC
			\STATE Initialize parameters $\theta_{enc}$ for context encoder network
			\WHILE{not done}
			\FOR{each task $\mu_{n}$}
			\STATE Roll-out exploration policy $\pi_{exp}$, producing transitions $\{(s_{j},a_{j},s'_{j},r^{env}_{j},r^{exp}_{j})\}_{j:1 \cdots N}$, where $r^{exp}_{j} = r^{env}_{j} + \alpha r^{aux}_{j}$
			\STATE Add tuples to exploration replay buffer $\mathcal{B}^{i}_{exp}$ and execution replay buffer $\mathcal{B}^{i}_{exe}$
			\STATE Roll-out execution policy $\pi_{exe}$, producing transitions $\{(s_{k},a_{k},s'_{k},r^{env}_{k})\}_{k:1 \cdots K}$
			\STATE Add tuples to execution replay buffer $\mathcal{B}^{i}_{exe}$
			
			\ENDFOR
			\FOR{each training step}
			\FOR{each task $\mu_{n}$}
			\STATE Sample transitions for encoder $b^{n}_{enc} = \{(s_{i},a_{i},s'_{i},r^{env}_{i})\}_{i:1 \cdots T}\sim \mathcal{B}^{n}_{exp}$ and RL batch \\ $b^{n}_{exp} = \{(s_{i},a_{i},s'_{i},r^{exp}_{i})\}_{i:1 \cdots N'} \sim\mathcal{B}^{n}_{exp}$, $b^{n}_{exe} = \{(s_{i},a_{i},s'_{i},r^{env}_{i})\}_{i:1 \cdots K'} \sim\mathcal{B}^{n}_{exe}$
			\STATE Update $\theta_{enc}$ and $\theta_{exe}$ with RL loss and contrastive loss using $b^{n}_{enc},b^{n}_{exe}$
			\STATE Update $\theta_{exp}$ with RL loss using $b^{n}_{enc},b^{n}_{exp}$
			\ENDFOR
			\ENDFOR
			\ENDWHILE
		\end{algorithmic}
	\end{algorithm}
	
	\begin{algorithm}[htb]
		\caption{CCM Meta-testing}
		\label{alg2}
		\begin{algorithmic}[1]
			\REQUIRE Test task $\mu \sim p(\mu)$,
			\STATE Initialize $b^{\mu} = \{\}$ 
			
			\FOR{each episode}
			\FOR{$k = 1,\cdots,K$}
			\STATE Roll-out exploration policy $\pi_{exp}$, producing transition $D_{exp}^{\mu} = \{(s_{j},a_{j},s'_{j},r^{env}_{j})\}$
			\STATE Accumulate transitions $b^{\mu} = b^{\mu} \cap D_{exp}^{\mu}$
			\ENDFOR
			\ENDFOR
			\FOR{$i = 1,\cdots,K$}
			\STATE Roll-out execution policy $\pi_{exe}$ conditioned on $b^{\mu}$, producing transition $D_{exe}^{\mu} = \{(s_{j},a_{j},s'_{j},r^{env}_{j})\}$
			\STATE Accumulate transitions $b^{\mu} = b^{\mu} \cap D_{exe}^{\mu}$
			\ENDFOR
			
		\end{algorithmic}
		
	\end{algorithm}

	\section{Implementation Details}
	For each environment, we meta-train 3 models, and meta-test each of them. The evaluation results reflect the average return in the last episode over a test rollout. We show the pseudo-code for CCM during meta-training and meta-testing in Algorithm~\ref{alg1} and \ref{alg2}. Note that here we calculate the exploration intrinsic reward during interaction with environment instead of training to reduce the computational complexity. To avoid the intrinsic reward being to noisy, we pretrain the context encoder for several episodes and set the exploration replay buffer to only contain recently collected data.
	\begin{figure}[htb]
		\centering
		\includegraphics[width=0.5\textwidth]{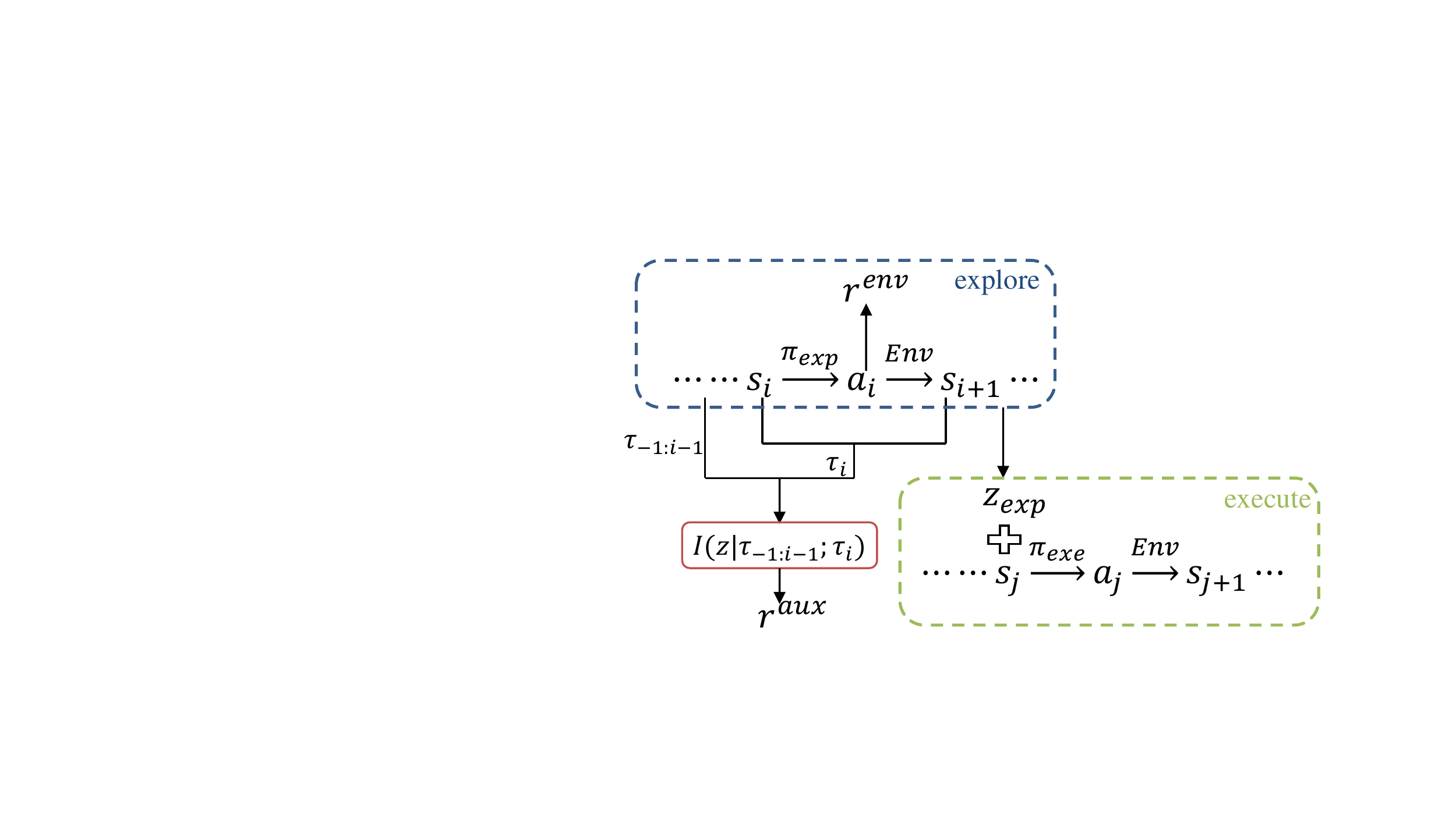}
		\caption{CCM adaptation procedure.}
		\label{figapp1}
	\end{figure}
	
	We model the context encoder as the product of independent Gaussian factors, in which the mean and variance are parameterized by MLPs with $3$ hidden layers of $(300, 300, 300)$ units that produce a 7-dimensional vector. When comparing context encoder training strategy, we use deterministic version of the context encoder network. In other cases, we select $\beta$ in KL divergence term from $\{0.01, 0.1, 1\}$. For contrastive context encoder, the scale of RV or DP loss is $1$ while the scale for contrastive loss is chosen between $\{1, 5, 10\}$. For DP, the penalty parameter~\citep{DBLP:journals/corr/abs-2005-06800} is set to be $0.5$. We use SAC for both exploration and execution agents and set learning rate as $3e-4$. Other hyperparameters are detailed on Table~\ref{tab1} and \ref{tab2}.
	\begin{table}[htb]
		\centering
		\begin{tabular}{lllll}
			\centering
			
			Environment & Meta-train tasks & Meta-test tasks & Number of exploration episodes & $\alpha$  \\\hline 
			cheetah-sparse & 60 & 10 & 2 & 1 \\
			walker-sparse & 60 & 10 & 2 & 2 \\
			hard-point-robot & 40 & 10 & 4 & 2 \\\hline
			
		\end{tabular}
		\caption{CCM's hyperparameters for sparse-reward environments}
		\label{tab1}
	\end{table}
	
	\begin{table}[htb]
		
		\centering
		\begin{tabular}{llll}
			\centering
			
			Environment & Meta-train tasks & Meta-test tasks & Meta batch size \\\hline 
			humanoid-dir & 100 & 30 & 20\\
			cheetah-mass & 30 & 5 & 16\\
			cheetah-mass-OOD & 30 & 5 & 16\\
			cheetah-vel-OOD & 50 & 5 & 24\\
			ant-mass & 50 & 5 & 24\\\hline
			
		\end{tabular}
		\caption{hyperparameters for continuous control benchmarks}
		\label{tab2}
	\end{table}

	\section{Additional Ablation Study}
	\subsection{Further evaluation of contrastive context encoder}
	
	In this section, we show the performance of our proposed contrastive context encoder without combining with other training strategy (i.e. RV or DP). \textit{CCM only} denotes that the context encoder is trained only with gradients from contrastive loss. As shown in Figure~\ref{figapp2}, context encoder trained only with contrastive loss is still able to reach a comparable or better performance than existing methods.
	
	\begin{figure}[htb]
		\centering
		\includegraphics[width=0.9\textwidth]{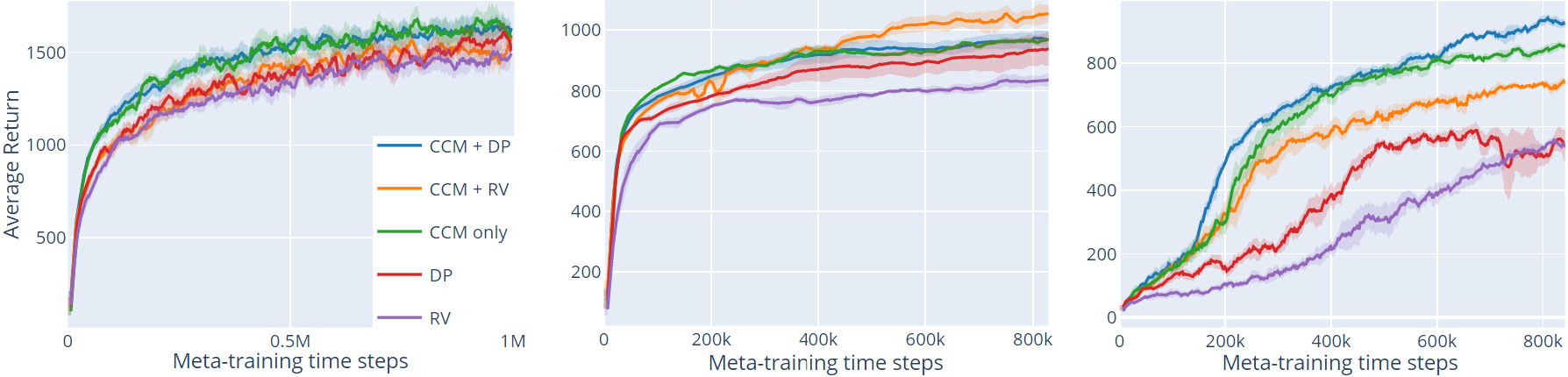}
		\caption{Contrastive context encoder without combining other training strategies}
		\label{figapp2}
	\end{figure}
	\subsection{Intrinsic reward scale}
	We investigate the influence of extra intrinsic reward for CCM's exploration scale by changing the intrinsic reward scale $\alpha$. We experiment with $\alpha \in \{0.1, 1, 10\}$ and show the results on cheetah-sparse. As shown in Figure~\ref{figapp3}, both too large and too small value of intrinsic reward scale will decrease the adaptation performance. Completely depending on intrinsic reward or environment reward will hinder the exploration process.

	\subsection{Context updating frequency}
	We also conduct experiments to show the influence of different context updating frequency during adaptation. We compare the adaptation performance between updating context every episode and updating context every step. As shown in Figure~\ref{figapp3}, updating context every step obtains relatively better returns which is consistent to our original objective in Equation~(\ref{firstmi}).  
	\begin{figure}[htb]
		\centering
		\includegraphics[width=0.65\textwidth]{}
		\caption{(a) Context updating frequency comparison;  (b) Intrinsic reward scale comparison}
		\label{figapp3}
	\end{figure}

	\subsection{Influence of the Regularization Term}
	We perform ablation experiments in order to investigate the influence of the regularization term in the objective function which reflects the change in similarity score with all the tasks in embedding space (the second term in (\ref{lowerbound6})). We compare against CCM using intrinsic reward without the regularization term on in-distribution version of hard-point-robot. The results are shown in Figure~\ref{fig4}. The regularization term improves CCM's performance and we assume that without this term the exploration agent may try to collect transitions that the similarity score with all the tasks in embedding space increases as well, which will result in a noisy updating signal.
	
	\begin{figure}[htb]
		\centering
		
		\includegraphics[width=0.35\textwidth]{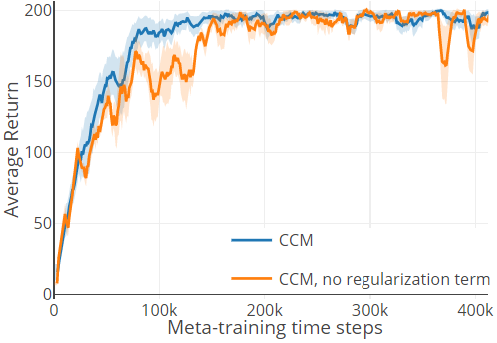}
		
		\caption{Influence of the regularization term}
		
		\label{fig4}
	\end{figure}
	
	\section{Additional Details}
	\subsection{Derivation of Proposition 1.}
	\begin{proof}
		As defined in \cite{DBLP:journals/corr/abs-1807-03748}, the optimal value for $exp(f(c_{1:i-1}, c_{pos}))$ is given by $\frac{p(c_{pos}|c_{1:i-1})}{p(c_{pos})}$, inserting this back into Equation (\ref{lowerbound3}) results in:
		\begin{equation}
		\begin{aligned}
		L_{upper}  &= -\mathop{\mathbb{E}}\limits_{C_{pos}}\log\Big[\frac{\frac{p(c_{pos}|c_{1:i-1})}{p(c_{pos})}}{\sum _{c_{j}\in C_{neg}}\frac{p(c_{j}|c_{1:i-1})}{p(c_{j})}} \Big]\\ &= \mathop{\mathbb{E}}\limits_{C_{pos}}\log\Big[ \frac{p(c_{pos})}{p(c_{pos}|c_{1:i-1})} \sum _{c_{j}\in C_{neg}}\frac{p(c_{j}|c_{1:i-1})}{p(c_{j})} \Big]\\ &\approx \mathop{\mathbb{E}}\limits_{C_{pos}}\log\Big[ \frac{p(c_{pos})}{p(c_{pos}|c_{1:i-1})}(W-1) \mathop{\mathbb{E}}\limits_{c_{neg}} \frac{p(c_{j}|c_{1:i-1})}{p(c_{j})} \Big]\\ &= \mathop{\mathbb{E}}\limits_{C_{pos}}\log\Big[ \frac{p(c_{pos})}{p(c_{pos}|c_{1:i-1})}(W-1)\Big]\\ &\leqslant \mathop{\mathbb{E}}\limits_{C_{pos}}\log\Big[ \frac{p(c_{pos})}{p(c_{pos}|c_{1:i-1})}W \Big]\\ &= -I(c_{pos}; c_{1:i-1}) + \log(W)
		\label{lowerbound4}
		\end{aligned}
		\end{equation}
	\end{proof}
	\subsection{Derivation of Equation (14)}
	
	\begin{equation}
	\begin{aligned}
	L_{upper} - L_{lower} &= \mathop{\mathbb{E}}\limits_{C_{pos}}\log\Big[\frac{\exp(f(c_{1:i}, c_{pos}))}{\sum _{c_{j}\in C}\exp(f(c_{1:i}, c_{j}))} \Big]-\mathop{\mathbb{E}}\limits_{C_{pos}}\log\Big[\frac{\exp(f(c_{1:i-1}, c_{pos}))}{\sum _{c_{j}\in C_{neg}}\exp(f(c_{1:i-1}, c_{j}))} \Big] \\ &= \mathop{\mathbb{E}}\limits_{C_{pos}}[f(c_{1:i}, c_{pos})] - \mathop{\mathbb{E}}\limits_{C_{pos}}\log[\sum _{c_{j}\in C}\exp(f(c_{1:i}, c_{j}))]\\ &~~~-\mathop{\mathbb{E}}\limits_{C_{pos}}[f(c_{1:i-1}, c_{pos})] + \mathop{\mathbb{E}}\limits_{C_{pos}}\log[\sum _{c_{j}\in C_{neg}}\exp(f(c_{1:i-1}, c_{j}))] \\ &= \mathop{\mathbb{E}}\limits_{C_{pos}}\Big[f(c_{1:i},c_{pos}) - f(c_{1:i-1},c_{pos}) \Big] - \mathop{\mathbb{E}}\limits_{C_{pos}}\log\Big[\frac{\sum _{c_{j}\in C}\exp(f(c_{1:i}, c_{j}))}{\sum _{c_{j}\in C_{neg}}\exp(f(c_{1:i-1}, c_{j}))}\Big]
	\end{aligned}
	\end{equation}
	
	\subsection{Connection to recent work on Meta-RL exploration}
	Concurrent with this work, Liu et al.~(\citeyear{DBLP:journals/corr/abs-2008-02790})  and Zhang et al.~(\citeyear{DBLP:journals/corr/abs-2006-08170}) also focus on the exploration problem in Meta-RL and propose to learn execution and exploration with decoupled objectives. Our paper first points out that obtaining a high-quality latent context not only involves exploration to gain informative trajectories but also involves how to train the context encoder to extract the task-specific information. In contrast, those two concurrent work only focuses on the exploration problem. Here we address the latter problem by proposing contrastive context encoder. As for exploration, we theoretically derive a low-variance variational lower bound for the information gain objective based on contrastive loss, without introducing other settings like prediction model. Secondly, the intrinsic reward used by the two concurrent work can be interpreted as the Meta-RL version of "curiosity"~\citep{DBLP:conf/icml/PathakAED17}, which encourages the agent to visit places that is prone to generate high prediction error. However, high prediction error does not always represent informative transition. This might be caused by random noise which is task-irrelevant. In contrast, our objective based on the temporal difference of contrastive loss not only encourages temporal uncertainty of task belief but also limit the agent to explore transitions which is easy to make correct task hypothesis and filter out task-irrelevant information.   
	\clearpage
	\normalsize
\end{document}